\newdimen{\algindent}
\newtheorem{problem}{Problem}
\newtheorem{definition}{Definition}
\newtheorem{lemma}{Lemma}
\newtheorem{theorem}{Theorem}
\title{KFC: A Scalable Approximation Algorithm for $k$-center Fair Clustering \thanks{Both authors contributed equally to the paper. Author order is in alphabetical order. We thank Ho Chung Leon Law from University of Oxford for recommending the algorithm name. We also thank all the reviewers for their incisive comments and helping us improve the paper. }}
\author{%
  Elfarouk Harb$^\ast$\\
  Independent Researcher \\
  Hong Kong \\
  \texttt{eyfmharb@gmail.com} \\
   \And
   Lam Ho Shan$^\ast$ \\
   Hong Kong University of Science and Technology \\
   Hong Kong \\
   \texttt{hslamaa@connect.ust.hk} \\
}
\begin{document}

\maketitle

\begin{abstract}
In this paper, we study the problem of fair clustering on the $k-$center objective. In fair clustering, the input is $N$ points, each belonging to at least one of $l$ protected groups, e.g. male, female, Asian, Hispanic. The objective is to cluster the $N$ points into $k$ clusters to minimize a classical clustering objective function. However, there is an additional constraint that each cluster needs to be fair, under some notion of fairness. This ensures that no group is either ``over-represented'' or ``under-represented'' in any cluster. Our work builds on the work of Chierichetti et al. (NIPS 2017), Bera et al. (NeurIPS 2019), Ahmadian et al. (KDD 2019), and Bercea et al. (APPROX 2019). We obtain a randomized $3-$approximation algorithm for the $k-$center objective function, beating the previous state of the art ($4-$approximation). We test our algorithm on real datasets, and show that our algorithm is effective in finding good clusters without over-representation or under-representation, surpassing the current state of the art in runtime speed, clustering cost, while achieving similar fairness violations. 
\end{abstract}

\section{Introduction and Prior Work}
The aim of classical clustering is to group $N$ points into $k$ clusters in order to minimize an objective function, such as $k-\{\text{centers, medians, means, ...}\}$. The problem has been studied extensively and there is a plethora of papers discussing approximation algorithms and PTAS algorithms for different objective functions. (See \cite{kmedA1, kmedA2} for $k$-median, \cite{kcenter1, kcenter2, kcenter3} for $k-$center, \cite{kmeanA1, kmeanA2} for $k$-means, and \cite{survey} for a recent survey). 

Variants of clustering problems have been proposed to impose additional constraints to attain a desired demographic in clustering outcome, diversity of a cluster for instance. The constraint of interest in this paper is fairness, it is motivated by practical scenarios encountered when formulating clustering problems that require fairness and has received a lot of attention recently \cite{mainpaper,nips,Chierichetti1, faircite1,fairnessdef1,fairnessdef2,fairnessdef3, fairnessawareness}. Ahmadian et al. \cite{mainpaper} provided some examples such as clustering news articles while requiring that no one view point or news source over-represents any cluster. They also noted its necessity in online advertising systems to ensure that no advertiser has the majority of a market share over certain keyword clusters. Other examples where a fair clustering is crucial include clustering people to choose who is awarded loans \cite{KHANDANI20102767, Malhotra2003EvaluatingCL}, or using clustering to predict recidivism \cite{recidivism} where using classical clustering techniques can be detrimental to minority groups. We discuss this with more details in the broader impact section. 

There are several definitions and notions of fairness in the literature. Dwork. et al. \cite{fairnessdef1} used the Lipschitz inequality, by  defining a clustering $M$ as a map that maps a point in the inputs, to a probability distribution over $k$ clusters. The objective was to find a clustering mapping $M$ that minimizes some utility loss function, with $M$ subject to the Lipschwitz inequality as a notion of fairness. Other papers \cite{fairnessdef2, fairnessdef3} tried to impose fairness by either eliminating disparate impact from the resulting clusters, or using regularization techniques that attempted to remove prejudice, which was shown to be effective when used on logistic regression. 

In this paper, we focus on fairness, or equivalently the ``over/under-representation'' constraint, and restrict ourself to the $k-$center objective function. Firstly, the classical $k-$center problem is NP-hard and permits a $2-$approximation \cite{kcenter1,kcenter2}. Meanwhile, finding a better approximation ratio than 2 is also NP-hard \cite{kcenter3}. Several approximation algorithms for fair clustering have been proposed in recent years. Chierichetti et al. \cite{Chierichetti1} extended the definition of disparate impact to clustering problems. Assuming each point in the input has a ``color'' (representing its group), they defined a fair cluster as a cluster where the distribution of the colors in each cluster is the same as the distribution of colors over all points. They gave the idea of fairlets to provide an approximation algorithm to the problem when there are only $2$ groups (e.g. males or females). This was later generalized for multiple groups (e.g white, Asian, Hispanic, $\hdots$) by Rösner et al. \cite{roserner} to a $14-$approximation algorithm for the $k-$center objective. However, their definition of fairness is quite restrictive, as sometimes there are some input instances where such a fair cluster by that definition does not exist. 

Finally, the closest work to ours are the papers by Ahmadian. et al. (KDD 2019) \cite{mainpaper}, Bera. et al. (NeurIPS 2019) \cite{nips}, and Bercea et al. (APPROX 2019) \cite{bercea}. Their idea of fairness stems from the classical $k-$center problem. Given a user defined parameter $\alpha$, \cite{mainpaper} defined a fair cluster as a cluster where the fraction of points with a given color in any cluster is at most $\alpha$ (known as the restricted dominance constraint). This ensures that no group ``dominates'' any cluster. However, we note that \cite{mainpaper} and \cite{bercea} did not allow overlapping between groups, e.g. an input point \textbf{cannot} belong to both ``male'' and ``Asian'' groups. \cite{nips} generalized this by allowing groups to overlap. They also allowed each group $g$ to have its own $\alpha_g$, as well as a $\beta_g$ that ensures that the fraction of points belonging to $g$ in any cluster is at least $\beta_g$ (known as the minority protection constraint) and at most $\alpha_g$. Note that \cite{mainpaper} formulation is a special case of \cite{nips} formulation. \cite{bercea} Also gave an elegant, yet computationally expensive, approximation framework for essentially fair clustering for various objectives including $k-$center. 

Currently, the state of art for the generalized formulation by \cite{nips} is a $5-$approximation linear program (or $4$-approximation for the case when the centers and clients are the same). \cite{mainpaper} provided a $3-$approximation for the restricted version of their formulation. In this paper, we provide a new $5-$ approximation algorithm for the generalized formulation (or $3-$ approximation for the same special case as \cite{nips}, and thus beating the previous $4-$approximation state of the art). We show theoretically and practically that our algorithm is orders of magnitude faster than the state of the art, more cost effective, and has a very small additive violation that is comparable to the state of the art. 

\section{Preliminaries}
Let $C$ be a set of $N$ points embedded in a metric space $(\mathcal{X} , d)$, and $F \subset \mathcal{X}$ be a set of potential cluster centers (i.e “facilities”). Note that in our analysis, we show that we can strengthen our algorithm's approximation ratio for the special case $F=C$, but the general case does not require $F=C$. We also use $d(x, S)$ to denote $\min_{y\in S} d(x, y)$, $[n]$ to denote the set $\{1,2,\hdots, n\}$, and lastly $[0,1]$ to denote the set $\{x\in \mathbb{Q}|0\leq x \leq 1\}$. 

The idea of fair $k-$center clustering originates from the definition of the classical $k-$center clustering. 

\begin{problem}
\label{classicalLP}
The classical $k-$center problem asks to find a set of $k$ centers $S\subset F, |S|=k$ and a mapping $\phi: C \rightarrow S$ such as to minimize the following objective function
\begin{equation*}
\begin{array}{ll@{}ll}
\mathlarger{\mathlarger{\min}}_{S\subset F, |S|=k, \Phi} & \max _{v\in C} d(v,\phi(v)). & \\
\end{array}
\end{equation*}
\end{problem}
In the classical $k-$center problem, it is trivial to show that for a set of centers $S$, the function $\phi$ mapping a point $v\in C$ to its nearest neighbor in $S$ minimizes the $k-$center objective. 

The fair $k-$center requires that no group is over-represented or under-represented within any cluster in addition to the classical $k-$center requirement, its definition is as following:

\begin{problem}
\label{fairLP}
Definition 1 in \cite{nips}. In the fair $k-$center clustering problem, we are given a set of $l$ protected groups, $C_1, \hdots, C_l$ (not necessarily disjoint). Each point $c\in C$ belongs to at least one group $C_i$. We are also given a set of vectors $\alpha, \beta \in \mathbb{R}^l, \alpha_i \geq \beta_i$. The problem requires finding a set $S\subset F, |S|=k$, and a mapping $\phi: C \rightarrow S$ such as to minimize the following objective function
\begin{equation*}
\begin{array}{ll@{}ll}
\mathlarger{\min}_{S\subset F, |S|=k, \phi} & \max _{v\in C} d(v,\phi(v)) & \\
\text{s.t} & \left| \{ v\in C_i | \phi(v)=f \}  \right| \leq \alpha_i \left| \{ v\in C | \phi(v)=f \}  \right|  & \quad \forall f\in S, \forall i\in [l] \quad (RD), \\
 & \left| \{ v\in C_i | \phi(v)=f \}  \right| \geq \beta_i \left| \{ v\in C | \phi(v)=f \}  \right|  & \quad \forall f\in S, \forall i\in [l] \quad (MP), \\
\end{array}
\end{equation*}
where the RD is the \textit{restricted dominance} constraint, and MP is the \textit{minority protection} constraint. 
\end{problem}

While mapping a point to its nearest center minimizes the cost, it does not guarantee that the resulting clusters would satisfy the RD and MP constraints. Let $\Delta $ denote the maximum number of groups a single client $v\in C$ can belong to. We note that in the Ahmadian et al.\cite{mainpaper} paper, they studied the variant with the set of restriction $V=\{\beta = {\bf 0}, \Delta = 1, \alpha_i=\alpha \forall i \in [l] \}$. Bera et al. \cite{nips} noted that Problem $2$ is NP-Hard via a reduction from the 3D matching problem, while \cite{mainpaper} showed that their restricted problem is also NP hard to approximate with a ratio better than $2$ for $\alpha \in (0,0.5]$. Hence, it is natural to consider a relaxation of the fair $k-$center formulation:
\begin{problem}
\label{fairViolationLP}
The $\mathcal{E}-$relaxed fair $k-$center clustering problem is the same as Problem $2$, but it relaxes the RD and MP constraints to allow an $\mathcal{E}$ additive violation to the fairness constraint 
\begin{equation*}
\begin{array}{ll@{}ll}
\mathlarger{\min}_{S\subset F, |S|=k, \phi} & \max _{v\in C} d(v,\phi(v)) & \\
\text{s.t} & \left| \{ v\in C_i | \phi(v)=f \}  \right| \leq \alpha_i \left| \{ v\in C | \phi(v)=f \}  \right| +\mathcal{E} & \quad \forall f\in S, \forall i\in [l] \quad (RD), \\
 & \left| \{ v\in C_i | \phi(v)=f \}  \right| \geq \beta_i \left| \{ v\in C | \phi(v)=f \}  \right|-\mathcal{E}  & \quad \forall f\in S, \forall i\in [l] \quad (MP), \\
\end{array}
\end{equation*}
for some additive violation $\mathcal{E}$. 
\end{problem}
Let $\lambda^*, \lambda^*_{\alpha}, \lambda^*_{\alpha, \mathcal{E}}$ be the optimal $k-$center distance for Problem $ \ref{classicalLP}, \ref{fairLP}, \ref{fairViolationLP}$ respectively for the same input set $(C,F,k)$. Then it is clear that 
\[
\lambda^* \leq \lambda^*_{\alpha, \mathcal{E}} \leq \lambda^*_{\alpha}.
\]

 Ahmadian et al. \cite{mainpaper} provided a $3-$approximation algorithm using linear programming and rounding for Problem \ref{fairLP} under the restrictions $V$, with the caveat that the resulting solution is feasible in Problem \ref{fairViolationLP} with $\mathcal{E}=2$. In other  words, their algorithm admits a clustering that violates the constraints additively by at most 2 points, and has $k-$center cost at most $3\lambda_{\alpha}^*$. However, the linear program described in \cite{mainpaper} has $\Theta(N^2)$ variables and $\Theta(N^2)$ constraints for the linear program which make it not scalable for large inputs. They also provided an alternative practical variation of their algorithm that has $\Theta(Nk)$ variables and constraints, but the $3-$approximation analysis does not hold for this variation. 

 On the other hand, Bera et al. \cite{nips} provided a general framework to solve Problem \ref{fairLP} for multiple clustering costs ($k-\{$center, means, median, ...$\}$), which corresponds to a $5-$approximation to Problem \ref{fairLP} with the caveat that the additive violation is at most $4\Delta + 3$. Their approximation ratio improves to $4$ if $F=C$. 

Lastly, Bercea et al. \cite{bercea} provided a $3-$approximation algorithm when $F=C$ and $5-$approximation when $F\neq C$ for Problem \ref{fairLP}. However, the LP uses $\Theta(N^2)$ LP variables and constraints, and does not allow a point to belong to multiple groups, i.e $\Delta=1$. While the algorithm is elegant, it involves a mixture of advanced and computation heavy techniques such as a large LP, and then using a max-flow min cost setup to round the LP variables which makes it impractical for large input sizes. 

In this paper, we present a new scalable randomized $5-$approximation algorithm to Problem \ref{fairLP}, which improves to a $3-$approximation if $F=C$. Although the clustering returns by our algorithm might also not be feasible for Problem \ref{fairLP}, we prove that it satisfies $\mathbb{E}(\mathcal{E})=0$ so it should satisfy the constraints on average. We also sketch how to alter our algorithm to enforce a maximum additive violation of $4\Delta + 3$, as opposed to just $\mathbb{E}(\mathcal{E})=0$, using the LP iterative rounding technique developed by Bera. et al. \cite{nips}. We show experimentally on real datasets that our algorithm beats the state of the art algorithms in terms of runtime (by orders of magnitude) and clustering cost, while achieving very similar additive violations $\mathcal{E}$ close to $0$ in the resulting clusters. 

\begin{figure*}
\centering
\includegraphics[width=0.7\textwidth, height=0.3\textheight]{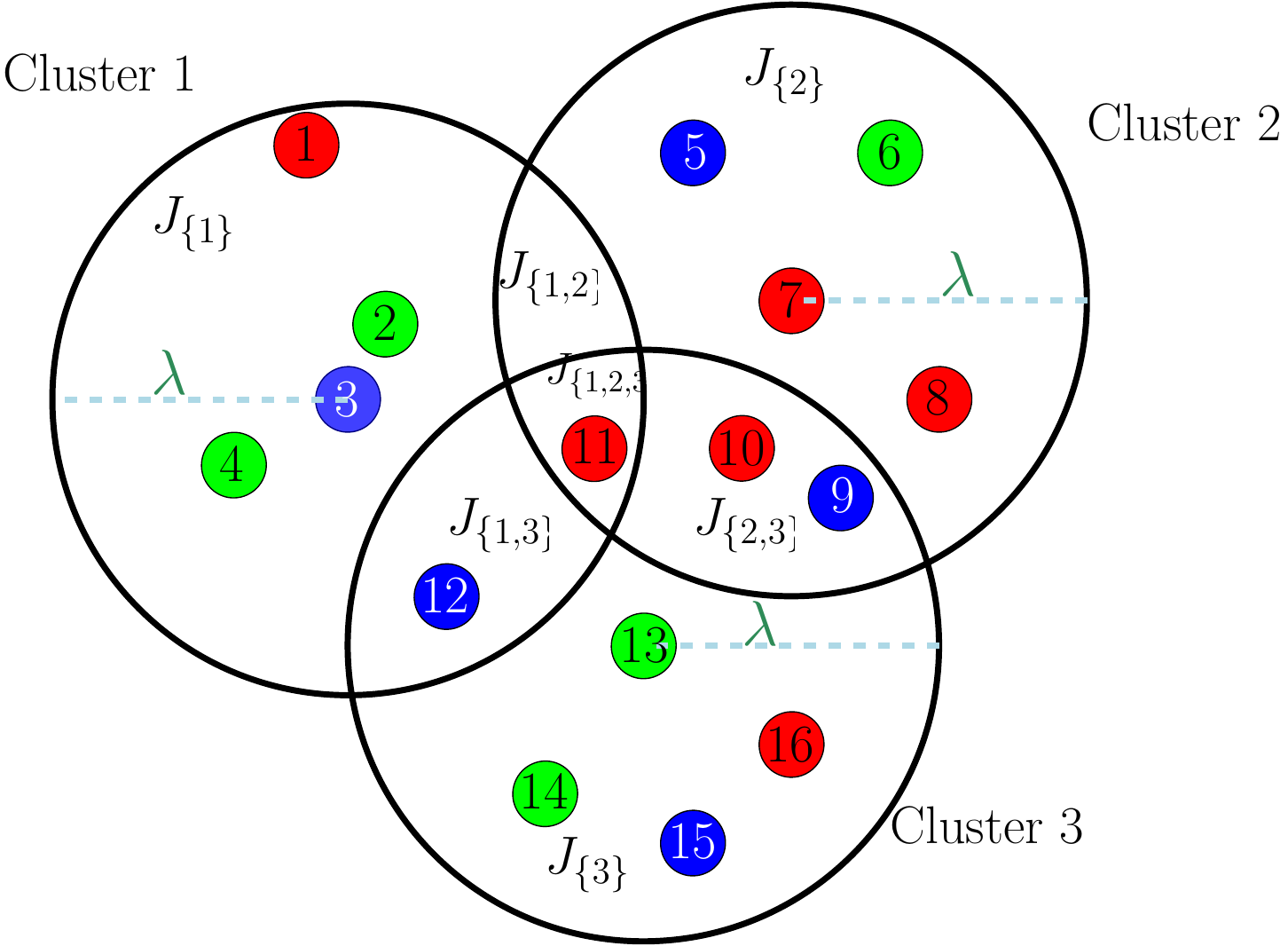}
\caption{A $\lambda-$Venn diagram for the three centers $\{3,7,13\}$, where point $6$ \textit{belongs} to $J_{\{2\}}$, point $12$ \textit{belongs} to $J_{\{1,3\}}$, point $11$ \textit{belongs} to $J_{\{1,2,3\}}$. In total, the joiners are $J_{\{1\}},J_{\{2\}},J_{\{3\}},J_{\{1,2\}},J_{\{1,3\}},J_{\{2,3\}},J_{\{1,2,3\}}$. All joiners are not empty except $J_{\{1,2\}}$}
\label{fig:flies}
\end{figure*}

\section{Definitions and Algorithm}
Our algorithm for Problem \ref{fairLP} is outlined in Algorithm \ref{MINMAKESPAN}. In essence, we first run the greedy $k-$center algorithm for Problem \ref{classicalLP} to get $k$ centers $S\subset F$. Using the fact that $\lambda^*_{\alpha}\in [0,2\max_{x,y}(d(x,y))]$, we do binary search on the optimal radius $\lambda^*_{\alpha}$. In each step, we consider a guess for the optimal radius $\lambda$, and formulate a \textbf{frequency-distributor} linear program (to be explained later) of a small size. Then we update the binary search interval based on whether the LP has a feasible solution or not.
In what follows, we fix a $\lambda$ and assume $d(x,S)\leq \lambda, \forall x\in C$. For the ease of our analysis, let us define a $\lambda-$Venn diagram:
\begin{definition}
Given a set of $k$ centers $S$, a $\lambda-$Venn diagram region is the region $R$ defined as\[ 
R  \triangleq \bigcup_{x\in S} B(x,\lambda),
\] where $B(x,\lambda)$ is the closed ball with center $x$ and radius $\lambda$.
\end{definition}
Now fix a $\lambda-$Venn diagram that includes all the $N$ points, we define a \textbf{joiner} object as follows:
\begin{definition}
Given a nonempty subset $S' \subset S$, denote $J_{S', \lambda}$ as the \textbf{joiner} of clusters $S'$ as the region
\[ 
J_{S',\lambda} \triangleq \bigcap_{x\in S'}B(x,\lambda) \bigcap_{x\in S-S'}\overline{B(x,\lambda)},
\]
where $\overline{B(x,\lambda)}$ is the compliment of the ball.
\end{definition}
We say a point $x$ belongs to a joiner $J_{S,\lambda}$ if $x\in  J_{S,\lambda}$ and that a joiner is empty if no point from $C$ belongs to it. Figure $1$ shows a diagram illustrating these definitions. \\
Next, we state the following topology lemma. (See Appendix $A$ for proofs.)

\begin{lemma}
Given a set of $k-$centers $S$, and an associated $\lambda-$Venn diagram $R$, denote $2^S$ as the powerset of $S$, then the following holds 
\begin{enumerate}
    \item For non-empty $A,A' \subset S, A \neq A',$ we must have $J_{A,\lambda}\cap J_{A',\lambda}=\phi $
    \item The union of all joiners partitions $R$: $R = \bigcup_{A \in 2^S, A\neq \phi} J_{A,\lambda} $
    \item The number of non empty joiners is at most $\min(2^k-1, N)$ 
\end{enumerate}
\end{lemma}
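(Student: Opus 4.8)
The plan is to prove the three claims in the stated order, each reducing directly to the definition of a joiner as an intersection of closed balls and complements of closed balls. For part (1), the key observation is that two distinct nonempty subsets $A \neq A'$ have nonempty symmetric difference, so there is a center $x \in S$ lying in exactly one of them; without loss of generality $x \in A \setminus A'$. By definition $J_{A,\lambda} \subseteq B(x,\lambda)$ because $x \in A$, whereas $J_{A',\lambda} \subseteq \overline{B(x,\lambda)}$ because $x \in S - A'$. Since a set and its complement are disjoint, the two joiners cannot share a point, giving $J_{A,\lambda} \cap J_{A',\lambda} = \emptyset$.

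For part (2) I would prove the two inclusions separately. The inclusion $\bigcup_{A} J_{A,\lambda} \subseteq R$ is immediate: any nonempty $A$ contains some center $x_0$, so $J_{A,\lambda} \subseteq B(x_0,\lambda) \subseteq R$. For the reverse inclusion, given $p \in R$ I would define its \emph{membership set} $A_p = \{\, x \in S : d(p,x) \le \lambda \,\}$. Since $p \in R = \bigcup_{x\in S} B(x,\lambda)$, the set $A_p$ is nonempty; moreover $p \in B(x,\lambda)$ for every $x \in A_p$ by construction, while $d(p,x) > \lambda$, i.e. $p \in \overline{B(x,\lambda)}$, for every $x \in S - A_p$. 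Hence $p \in J_{A_p,\lambda}$, so $R \subseteq \bigcup_A J_{A,\lambda}$. Combined with part (1), this exhibits the joiners as a genuine partition of $R$.

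For part (3) the bound follows from two independent counts. There are exactly $2^k - 1$ nonempty subsets of the $k$-element set $S$, and since each joiner is indexed by such a subset, there are at most $2^k-1$ joiners in total. On the other hand, a nonempty joiner contains at least one point of $C$ by definition, and by part (1) the joiners are pairwise disjoint, so the nonempty joiners carve the $N$ points of $C$ into disjoint nonempty groups; there can be at most $N$ such groups. Taking the smaller of the two bounds yields $\min(2^k-1, N)$.

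Honestly this lemma is essentially bookkeeping, and I do not expect a serious obstacle; the one place that requires care is the open/closed dichotomy in part (2). Because $B(x,\lambda)$ is the \emph{closed} ball, its complement $\overline{B(x,\lambda)}$ is precisely the set of points at distance strictly greater than $\lambda$, so defining $A_p$ through the non-strict inequality $d(p,x) \le \lambda$ is exactly what guarantees that every center of $S$ contributes to $p$ either the factor $B(x,\lambda)$ or the factor $\overline{B(x,\lambda)}$, placing $p$ in one and only one joiner. Keeping this boundary case consistent is the single point where a careless argument could fail.
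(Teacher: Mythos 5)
Your proof is correct and follows essentially the same route as the paper's: part (1) isolates a center in the symmetric difference of $A$ and $A'$ so that one joiner lies in $B(x,\lambda)$ and the other in its complement, part (2) constructs the membership set $A_p=\{x\in S: d(p,x)\le\lambda\}$ exactly as the paper does, and part (3) combines the $2^k-1$ subset count with disjointness over the $N$ points. The only cosmetic difference is that you make the easy inclusion $\bigcup_A J_{A,\lambda}\subseteq R$ explicit, which the paper leaves implicit.
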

Intuitively, the concept of a joiner is useful because for a point $x_i \in C$, and fixed $\lambda$, $x_i \in J_{A,\lambda}$ if and only if $x_i$ can be assigned to a cluster center  $f \in A$ and cannot be assigned to a cluster center $f \in S-A$, with a radius  $\leq \lambda$.

Next, we define the signature of a point $x\in C$ as a tuple of indices (ordered in increasing order) of all the groups it belongs to:
\begin{definition}
For any point $x\in C$ that belongs to the groups $C_{i_1}, \hdots, C_{i_r}$ with $i_1\leq \hdots \leq i_r$, then the signature of $x$ is defined as $Sig(x) = (i_1, \hdots, i_r)$
\end{definition}
By definition, we must have $r\leq \Delta$ for any point. We also use $I$ to denote all possible signatures, with $|I|\leq N$ as each point only has one signature. We say a group $C_i$ belongs to signature $c$ if $i$ is inside the signature ($i\in c$)\\
Now for any fixed signature $c\in I$ and $S'\subset S$, consider the set of points 
\[
L(c, S',\lambda) = \{x_i\in C | x_i \in J_{S',\lambda} \text{ and } Sig(x_i)=c\}.
\]
Given any solution to Problem \ref{fairLP} with radius $\lambda$, the points in $L(c,S',\lambda)$ are \textit{interchangeable}. This means that a solution to Problem $2$ assigning $a_j$ points of  $L(c,S',\lambda)$ to cluster $j\in S'$ is a valid solution only if assigning any $a_j$ points of $L(c,S',\lambda)$ to cluster $j$ would also be a valid solution. 

The next question to answer is, what portion of the $|L(c,S',\lambda)|$ points should be assigned to which cluster center $f \in S'$. To answer this, we define a \textbf{frequency-distributor} linear program. 
\begin{algorithm}
  \caption{Fair $k$ clustering}\label{MINMAKESPAN}
  \begin{algorithmic}[1]
    \Function{fair\_k\_cluster}{$C,F, k, \alpha,\beta, \epsilon$}
      \State $S \leftarrow greedy\_k\_center(F, k)$
      \State $d \leftarrow distance\_matrix(C,S)$
      \State $l,r,feasible \leftarrow 0, 2 \max(d), False$
      \While { $r-l>\epsilon$ or not $feasible$}
        \State $\lambda \leftarrow \frac{l+r}{2}$
        \If {$\exists x\in C$ with $d(x,S)>\lambda$}
            \State $l,feasible \leftarrow \lambda,False$ 
            \State continue
        \EndIf
        \State $LP \leftarrow Frequency\_Distributor\_LP(C,S, k, \alpha,\beta,\lambda)$
        \If {$LP$ has a feasible solution}
            \State $r,feasible \leftarrow \lambda,  True$
        \Else{}
            \State $l,feasible \leftarrow \lambda,  False$
        \EndIf{}
      \EndWhile
      \Return $\lambda$
    \EndFunction
  \end{algorithmic}
\end{algorithm}

\begin{definition}
For each signature $c\in I$, non-empty $S'\subset S$, and $j\in S'$ such that $|L(c,S',\lambda)|>0$, define an LP variable $x_{c,S', j}$ denoting the number of points in $J_{S',\lambda}$ with signature $c$ that would be assigned to cluster $j\in S'$. Define the \textbf{frequency-distributor} linear program as follows:

\begin{equation*}
\begin{array}{ll@{}ll}
\min \quad 1 & &\\
\beta_a \mathlarger{\mathlarger\sum}_{\substack{c\in I, S' \in 2^S \\|j\in S'}} x_{c,S',j} \leq 

\mathlarger  {\mathlarger \sum}_{\substack{S' \in 2^S, c\in I \\ | j\in S', a\in c} } x_{c,S',j} 

\leq \alpha_a \mathlarger{\mathlarger\sum}_{\substack{c\in I, S' \in 2^S\\ |j\in S'}} x_{c,S',j} & \quad \substack{\forall a\in [l] \\ \forall j\in S} & (1)\\  \\

\mathlarger {\mathlarger \sum}_{j\in S'}x_{c,S',j} = |L(c,S',\lambda)| & \quad \substack{\forall c\in I, \forall S' \in 2^S \\\text{such that } |L(c,S',\lambda)|>0} & (2)  \\ \\

x_{c,S',j} \geq 0 & \quad \substack{\forall c\in I, S'\in 2^S, \forall j\in S' \\ \text{ such that } |L(c,S',\lambda)|>0} & (3) \\ 
\end{array}
\end{equation*}

\end{definition}
The set of constraints $(1)$ defines the fairness constraint on the assignment. The set of constraints $(2)$ ensures that all points get assigned to a cluster. To get an idea on how large the frequency-distributor linear program is, we prove the following lemmas to bound the number of variables and constraints:
\begin{lemma}
The number of variables is at most $\min(2^{k-1}k |I|, Nk)$
\end{lemma}
\begin{proof}
See Appendix $B$. 
\end{proof}
Now, we bound the number of constraints.
\begin{lemma}
The number of constraints is at most $kl+\min(2^k|I|,Nk)+\min(2^{k-1}k|I|, Nk)$
\end{lemma}
\begin{proof}
See Appendix $B$.
\end{proof}

Note that compared to the $LPs$ described in \cite{mainpaper, nips, bercea}, our LP uses at most $\min(2^{k-1}k|I|, Nk)$ variables, as opposed to $\Theta(N^2$),
$\Theta(Nk), \Theta(N^2)$ respectively; and at most $kl+\min(2^k|I|,Nk)+\min(2^{k-1}k|I|, Nk)$ constraints, as opposed to $\Theta(N^2)$, $\Theta(Nk)$, and $\Theta(N^2)$ respectively. \\ 
In practice, the frequency-distributor linear program tends to be very small, both in number of variables and constraints, which leads to a quick feasibility check in Algorithm \ref{MINMAKESPAN}. 
\subsection{\textbf{Frequency-distributor} example for $k=3$}
See Appendix $C$ for a step-by-step example of formulating a frequency-distributor LP for the example in Figure \ref{fig:flies}. We omit it from the main paper due to the page-limit.

\subsection{Randomized Assignment}
Suppose the frequency-distributor LP has a feasible solution $\bf x$. For every non-empty $S'\subset S$, center $j\in S'$, signature $c\in I$, point  $x_i\in L(c,S',\lambda)$, we \textbf{independently} assign $x_i$ to cluster $j$ with probability $\frac{x_{c,S',j}}{|L(c,S',\lambda)|}$. Thus, on expectation, cluster $j$ would be assigned to $\frac{x_{c,S',j}}{|L(c,S',\lambda)|} \times |L(c,S',\lambda)| = x_{c,S',j}$ points, for all $c,S'$, leading to each cluster $j$ receives $x_{c,S',j}$ points with signature $c$ from each joiner region $J_{S'}$ on expectation as well. Since $x_{c,S',j}$ is a feasible solution that respects the fairness constraints, the randomized assignment respects the RD and MP constraints on expectation using linearity of expectation for each cluster $j$. 

Note that we can also restrict the number of violations to $4\Delta + 3$, as opposed to just $\mathbb{E}(\mathcal{E}) = 0$, using iterative rounding in a similar fashion to Bera. et al. \cite{nips}. The proof is deferred to the extended paper. 

\section{Analysis}
In this section, we prove our algorithm is a $5-$approximation to Problem $2$, and a $3$-approximation for the case $F=C$. We defer the runtime analysis to Appendix $D$ due to the page limit.  
\subsection{$5-$Approximation}
\begin{theorem}
Algorithm $1$ is a $5-$approximation to Problem $2$, and a $3-$approximation when $F=C$. 
\end{theorem}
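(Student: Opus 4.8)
The plan is to show that the least radius $\lambda$ for which the frequency-distributor LP is feasible is at most $3\lambda^*_\alpha$ when $F=C$ and at most $5\lambda^*_\alpha$ in general. Since the randomized-assignment step turns any feasible LP at radius $\lambda$ into a clustering (fair in expectation) of cost $\max_{v}d(v,\phi(v))\le\lambda$, and since Algorithm~1 binary-searches for this least feasible radius, the two bounds immediately yield the stated $3$- and $5$-approximations. First I would record two structural facts. (i) Feasibility of the LP is monotone in $\lambda$: enlarging $\lambda$ only enlarges each point's joiner $S'$, so any assignment feasible at $\lambda_1$ remains feasible at every $\lambda_2\ge\lambda_1$ (a point sent to $j$ with $d(x,j)\le\lambda_1\le\lambda_2$ is still within reach); hence the binary search is well-defined and converges, up to the additive $\epsilon$, to the infimum of feasible radii. (ii) The greedy $k$-center subroutine returns $S\subset F$ covering every client within $2\lambda^*$ when $F=C$ and within $3\lambda^*$ in the $k$-supplier case $F\ne C$; combined with $\lambda^*\le\lambda^*_\alpha$, every $x\in C$ satisfies $d(x,S)\le 2\lambda^*_\alpha$ (resp. $3\lambda^*_\alpha$).

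The crux is to exhibit an explicit feasible LP solution at the target radius, obtained by transporting an optimal fair clustering onto $S$. Let $(S^*,\phi^*)$ be an optimal solution of Problem~2 with cost $\lambda^*_\alpha$. I would define a representative map $\sigma:S^*\to S$ and reassign every client $x$ to $\sigma(\phi^*(x))$, setting the integral LP values $x_{c,S',j}$ equal to the resulting counts. When $F=C$, each optimal center $f^*\in S^*\subset C$ is itself a client, so I take $\sigma(f^*)$ to be its nearest center in $S$, giving $d(f^*,\sigma(f^*))\le 2\lambda^*_\alpha$ and, by the triangle inequality, $d(x,\sigma(f^*))\le d(x,f^*)+d(f^*,\sigma(f^*))\le\lambda^*_\alpha+2\lambda^*_\alpha=3\lambda^*_\alpha$ for every $x$ with $\phi^*(x)=f^*$. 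In the general case $f^*$ need not be a client, so I route through any client $x_0$ of the (nonempty) cluster of $f^*$: taking $\sigma(f^*)$ nearest in $S$ to $x_0$ gives $d(f^*,\sigma(f^*))\le d(f^*,x_0)+d(x_0,\sigma(f^*))\le\lambda^*_\alpha+3\lambda^*_\alpha=4\lambda^*_\alpha$, whence $d(x,\sigma(f^*))\le\lambda^*_\alpha+4\lambda^*_\alpha=5\lambda^*_\alpha$. Thus every client is reassigned within the target radius $\lambda\in\{3\lambda^*_\alpha,5\lambda^*_\alpha\}$, so the center it receives lies in its joiner $J_{S',\lambda}$ and the corresponding variable $x_{Sig(x),S',\sigma(\phi^*(x))}$ is indeed present in the LP.

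It then remains to check that this integral assignment satisfies constraints $(1)$--$(3)$. Constraints $(2)$ and $(3)$ are immediate, since every client is assigned exactly once and the counts are nonnegative. For the fairness constraints $(1)$ I would observe that the cluster finally received by a center $j\in S$ is the disjoint union of the optimal clusters $\{(\phi^*)^{-1}(f^*):\sigma(f^*)=j\}$, each of which obeys RD and MP; because $|C_a\cap\cdot|\le\alpha_a|\cdot|$ and $|C_a\cap\cdot|\ge\beta_a|\cdot|$ are both preserved under disjoint union (summing each inequality over the merged optimal clusters), the merged cluster at $j$ again satisfies RD and MP, which is exactly constraint $(1)$. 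Hence the LP is feasible at radius $3\lambda^*_\alpha$ (resp. $5\lambda^*_\alpha$), and by monotonicity together with the binary-search analysis Algorithm~1 returns $\lambda\le 3\lambda^*_\alpha+\epsilon$ (resp. $5\lambda^*_\alpha+\epsilon$), establishing the claimed approximations.

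I would expect the main obstacle to be the feasibility construction of the second paragraph: ensuring that a single representative $\sigma(f^*)$ is close enough to \emph{every} client of $f^*$'s optimal cluster that the required LP variable exists (i.e. $\sigma(f^*)$ lies in each such client's joiner at the target radius), and checking that the $F=C$ refinement genuinely saves a factor of two by exploiting $f^*\in C$ rather than detouring through a cluster point. The fairness-under-merging step and the monotonicity/binary-search bookkeeping are routine by comparison.
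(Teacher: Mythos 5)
Your proposal is correct and follows essentially the same strategy as the paper's proof: transport each optimal cluster wholesale onto a single greedy center shown (via triangle inequalities) to lie within $3\lambda^*_\alpha$ of all its points when $F=C$ and $5\lambda^*_\alpha$ in general, then observe that the RD/MP constraints are preserved when whole optimal clusters are merged, so the frequency-distributor LP is feasible at that radius. The only differences are presentational: you invoke the standard $2\lambda^*$ (resp.\ $3\lambda^*$) covering guarantee of farthest-first traversal as a black box where the paper re-derives it through a two-case analysis of where the greedy clients land, and you prove the fairness-under-disjoint-union step explicitly where the paper cites Claim~4 of Bera et al.
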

\begin{proof}

Let $O^*=\{o_1, o_2, ..., o_k\}$ be the optimal centers for Problem \ref{fairLP}, $\lambda_{\alpha}^*$ be the optimal radius, and $\phi^*$ be the optimal mapping. This partitions $C$ into $k$ partitions $P_1, .., P_k$ where $P_i=\{v\in C | \phi^*(v)=o_i \}$. Consider the $k$ clients chosen by the greedy algorithm as $U$, and the corresponding $k$ centers as $S=\{\sigma(u)|u\in U\}$, where $\sigma:C\rightarrow F$ denotes the closest center from $F$ to $u$. 

Consider an optimal cluster $B(o_i^*, \lambda_\alpha^*)$ and the greedy centers $S$. To prove the LP admits a solution, it is sufficient to prove that there is a greedy center  $f\in S$ such that $B(o_i^*, \lambda_{\alpha}^*)\cap C \subset B(f,5\lambda_\alpha^*)$ as illustrated in figure \ref{fig:proof}). This together with Claim $4$ from Bera et al. \cite{nips} gurantees the existence of a solution. 

\begin{figure}
    \centering
    \includegraphics[width=0.3\textwidth]{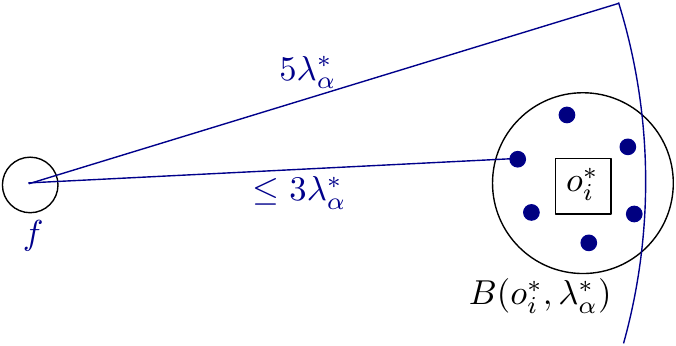}
    \caption{Visual idea of Theorem $1$ proof}
    \label{fig:proof}
\end{figure}

If for every optimal center $o_i$, there is one client $u_i\in U$ in the ball $B(o_i, \lambda_{\alpha}^*)$, then the ball $B(\sigma(u), 3\lambda_{\alpha}^*)$ contains the entirety of $B(o_i, \lambda_{\alpha}^*)$, because for any point $x\in B(o_i, \lambda_{\alpha}^*)$, $d(\sigma(u), x)\leq d(\sigma(u),u)+d(u,o_i)+d(o_i,x) \leq 3\lambda_\alpha^*$. Thus, the frequency distributor LP admits a feasible assignment which all the points in the optimal partition $P_i$ to the greedy cluster are contained in. This is possible because for every point $x\in P_i$ belonging to joiner $J_{S', \lambda}$ with $sig(x)=c$, then the point can contribute $1$ to $x_{c,S',i}$.  Note that if $F=C$, the optimal ball $B(o_i, \lambda_{\alpha}^*)$ is contained in the ball $B(u_i, 2\lambda_{\alpha}^*)$, because $\sigma(u)=u$.

On the other hand, consider if one optimal cluster has more than one client $u_a, u_b \in U$. Assume this first happens in optimal cluster with center $o_i$ and WLOG assume $a\leq b$. We have that $d(\sigma(u_a), u_b) \leq d(\sigma(u_a),u_a)+d(u_a, o_i)+d(o_i,u_b) \leq 3\lambda_\alpha^*$. But the greedy algorithm chooses $u_b$ because it is the furthest point from any of the chosen centers so far, meaning that every client left is now within $3\lambda_\alpha^*$ from some greedy center in $S$. Let $L = \{ i | \forall u\in U, u \not \in B(o_i,\lambda_\alpha^*) \}$ be the clusters left without a client from $U$ in them. Let $\tau:L \rightarrow C$ be any function that arbitrarily chooses any client inside the remaining clusters $L$. Then the balls $B(\sigma(\tau(i)), 5\lambda_\alpha^*)$ completely encompasses the remaining clusters $W=\{B(o_i, \lambda_\alpha^*) | i\in L \}$. Note that if $F=C$ we can improve the bound. First, all the remaining points are reachable from $u_a$ with distance at most $2\lambda_{\alpha}^*$ from the first paragraph. Thus, all centers (which are also clients) $\{o_i | i\in L \}$ are at a distance at most $2\lambda_{\alpha}^*$ from $\sigma(u_a)$. So the ball $B(u_a, 3\lambda_\alpha^*)$ completely encompasses the remaining clusters. Since the clusters are completely enclosed in some greedy cluster, the frequency distributor LP admits a valid solution at $\lambda = 5\lambda_\alpha^*$, or $\lambda = 3\lambda_\alpha^*$ when $F=C$ using the assignment from Claim $4$ from Bera et al. \cite{nips}. 
\end{proof}
\section{Empirical Evaluation}

\subsection{Experimental Setup}
We conduct experiments on 6 real-world datasets: reuters, victorian, 4area from \cite{mainpaper}, and bank, census, creditcard from \cite{nips}.

\begin{table}[ht]
    \centering
    \begin{tabular}{ c c c c c c c}
     \hline
     \textbf{Dataset} & \textbf{reuters} & \textbf{victorian} & \textbf{4area} & \textbf{bank} & \textbf{census} & \textbf{creditcard
     } \\ 
     \hline
     Sample size& 2,500 & 4,500 & 35,385 & 4,512 & 32,561 & 30,000\\  
     Features (dimension) & 10 & 10 & 8 & 3 & 5 & 13 \\
     Number of Groups  & 50 & 45 & 4 & 5 & 7 & 8\\
     Overlapping groups & No & No & No & Yes & Yes & Yes \\
     Protected group feature(s) & author & author & author & marital, & race,sex & marriage, \\
     & & & & default & & education\\
     \hline
    \end{tabular}
    \caption{Statistics of baseline datasets}
    \label{tab:my_label}
\end{table}

\textbf{Baseline Algorithms}.  We use three baseline algorithms in our experiment. The first one is the greedy $k-$center algorithm, note that it does not respect the fairness constraint, and so its cost is likely to be less than a cost that respects the constraint. The second one is the practical Linear Program from \cite{mainpaper} (Not the $3-$ approximation). Note that we did not include the algorithm from \cite{bercea} as it TLE on almost all input instances as was the case for the $3-$approximation from \cite{mainpaper}. This is mainly due to them using $\Theta(N^2)$ LP variables. Finally, we include the  $5-$approximation linear program from \cite{nips}. 

\textbf{Metrics}. We evaluate our algorithm and the baselines based on $3$ metrics. \emph{\textbf{Cost}}. Defined as the $k-$center cost, or the maximum distance from a point to its nearest center reported by the algorithm. \emph{\textbf{Runtime}}. We report the runtime in seconds that the algorithm take to terminate. If any algorithm takes longer than 30 minutes per run, we report a Time limit exceeded (TLE). \emph{\textbf{Additive Violation of fairness  constraint}}. We report the maximum additive violation $\mathcal{E}$ in any cluster output by all algorithms. Recall that for our algorithm, $\mathbb{E}(\mathcal{E})=0$, while the LP from \cite{mainpaper}, $\mathcal{E} \leq 2$ and the LP from \cite{nips}, $\mathcal{E} \leq 4 \Delta + 3$.  
\subsubsection{Implementation Details}
 The implementation for our algorithm and Ahmadian et al. \cite{mainpaper} is available at \cite{github}.  As the original implementation of \cite{mainpaper} was not available publicly or upon request, we implement \cite{mainpaper} based on their paper. We use the implementation of Bera et al. \cite{nips} available at \cite{githubbera} for the $5-$approximation. All algorithms are written in Python 3. For our algorithm and \cite{mainpaper}, we use the CPLEX solver to handle the linear programs in our implementation which drastically improves the solution speed, with the linear programs defined using the library Pulp \cite{pulp}, while the original implementation from \cite{nips} also uses CPLEX. We modify it to accept $\alpha, \beta$ as input parameters. All the computations are run independently on a Macbook Pro with a 2.4 GHz Quad-Core Intel Core i5 processor.
\subsection{Experimental Results}
\begin{table}[hbt!]
    \setlength\tabcolsep{2pt}
    \centering
    \begin{tabular}{ c | c | c c c c | c c c c | c c c c }
     \hline
     & & Cost & & & & $\mathcal{E}$ & & & & Runtime(s) & & &\\
     \hline
     Dataset & $\alpha$ & KFC & G & A & B & KFC & G & A & B & KFC & G & A & B\\ 
    \hline
    reuters &0.05 & \textbf{1.858} & 1.593 & 3.679 & 1.865 & 1(2) & 18 & \textbf{0}(0) & 2(2) & 77.661 & 0.173 & \textbf{51.54} & 61.77\\
    &0.20 & \textbf{1.573} & 1.575 & 3.278 & 1.604 & 1(1) & 10 & \textbf{0}(0) & \textbf{0}(0) & \textbf{30.085} & 0.172 & 44.68 & 53.56\\
    &0.40 & \textbf{1.540} & 1.582 & 3.278 & 1.604 & 1(1) & 3 & \textbf{0}(0) & 1(1) & \textbf{34.922} & 0.174 & 47.19 & 53.26\\
     \hline
    victorian &0.1 & 4.698 & 3.240 & 6.958 & \textbf{4.602} & 2(2) & 35 & \textbf{0}(0) & 1(1) & 158.2 & 0.314 & \textbf{89.65} & 180.3\\
    &0.3 & \textbf{3.868} & 3.108 & 7.612 & 4.174 & 1(2) & 21 & \textbf{0}(0) & 1(1) & \textbf{81.53} & 0.309 & 88.87 & 165.6\\
    &0.5 & \textbf{3.580} & 3.228 & 6.958 & 3.820 & 1(1) & 8 & \textbf{0}(0) &\textbf{0}(0) & \textbf{76.88} & 0.308 & 81.16 & 142.3\\
     \hline
     4area 
      & 0.45 & \textbf{9.421} & 9.786 & 19.13 & 9.768 & 2(2) & 31 & \textbf{0}(0) & \textbf{0}(0) &\textbf{14.51} & 2.283 & 393.1& 1190\\
      & 0.60 & 9.891 & 9.535 & 18.90 & \textbf{9.768} & \textbf{0}(1) & 0 & \textbf{0}(0) & \textbf{0}(0) &\textbf{12.96} & 2.237 & 370.4& 1276 \\
      & 0.80 & \textbf{9.671} & 9.762 & 14.02 & 9.768 & \textbf{0}(1) & 0 & \textbf{0}(0) & \textbf{0}(0) &\textbf{12.06} & 2.224 & 362.4& 1205\\
     \hline
     bank
      & 0.80 & \textbf{2.914} & 0.133 & N/A & 7.450 & \textbf{1}(1) & 1& N/A & 504(504) &\textbf{2.941} & 0.292 & N/A & 117.5\\
      ,cost x$10^4$& 0.90 & \textbf{2.914} & 0.133 & N/A & 7.450 & \textbf{1}(1)& 1& N/A & 231(231) &\textbf{2.940} & 0.291 & N/A & 117.9\\
      & 1.00 & \textbf{0.121} & 0.127 & N/A & N/A & \textbf{0}(0) & 0& N/A & N/A& \textbf{2.496} & 0.289 & N/A & N/A\\
     \hline
     census 
      & 0.86 & \textbf{118.5} & 58.34 & N/A & TLE & \textbf{0}(0)& 129& N/A & TLE &\textbf{19.83} & 2.153 & N/A & TLE\\
      ,cost x$10^4$& 0.90 & \textbf{118.5} & 57.33 & N/A & TLE & \textbf{0}(0)& 1& N/A & TLE &\textbf{19.63} & 2.168& N/A & TLE\\
      & 0.94 & \textbf{118.5} & 59.87 & N/A & TLE & \textbf{1}(1)& 1& N/A & TLE&\textbf{20.11} & 2.144& N/A & TLE\\
     \hline
     creditcard 
      & 0.60 & \textbf{124.5} & 56.02 & N/A & TLE & \textbf{1}(5)& 15& N/A & TLE &\textbf{26.49} & 2.472& N/A & TLE\\
      ,cost x$10^4$& 0.70 & \textbf{124.5} & 58.22 & N/A & TLE & \textbf{1}(6)& 1& N/A & TLE &\textbf{26.78} & 2.456& N/A & TLE\\
      & 0.80 & \textbf{124.5} & 56.85 & N/A & TLE & \textbf{1}(8)& 1& N/A & TLE &\textbf{26.83} & 2.485& N/A & TLE\\
      \hline
    \end{tabular}
    \caption{Comparison of $\lambda$, $\mathcal{E}$, and runtime(sec) of our algorithm with 3 baselines with varying $\alpha$}
    \label{tab:exp1}
\end{table}

The first experiment is a comparison with the baselines with respect to the fairness constraint $\alpha$, where the parameters are fixed as $k=25, \epsilon=0.1, \beta=0$, and $\alpha$'s with feasible solutions are selected. Note that we abbreviate greedy as G, Ahmadian et al. algorithm \cite{mainpaper} by A, and Bera et al.\cite{nips} algorithm by B, N/A as no solution found, and TLE as time limit exceeded for limit equals to 30 minutes. In addition, as A does not handle overlapping groups, it is not run on the last three datasets. For each dataset and $\alpha$, we run each experiment $5$ times and report the average cost, runtime, and median $\mathcal{E}$, where the maximum $\mathcal{E}$ from the $5$ runs is enclosed in brackets to show the robustness of our algorithm. 

As shown in Table \ref{tab:exp1}, our algorithm, A and B achieve a cost close to greedy and $\mathcal{E}$ within 2. Our result is also in sync with the results in \cite{mainpaper} and \cite{nips} that the $\mathcal{E}$ of greedy can be very high for tight $\alpha$-cap. Our algorithm also runs significantly faster on datasets with small number of groups shown by 4area. 

The second experiment is the effect of $k$ on the cost and runtime. We use $\alpha = 0.5, \epsilon=0.1$ on reuters and victorian with varying $k$. Aligned with the experimental results in \cite{mainpaper}, the runtime increases with $k$ while the cost decreases as $k$ increases. As shown in figure \ref{fig:exp2}, our algorithm and B follow the cost of greedy tightly until they reach a limit as k grows large, whereas A incurs a cost in double. In terms of runtime, A is the fastest followed by ours and then B. Overall out algorithm gives the best balance of run-time speed and cost guarantees. 
 \begin{figure}[ht]
     \centering
     \includegraphics[width=\textwidth]{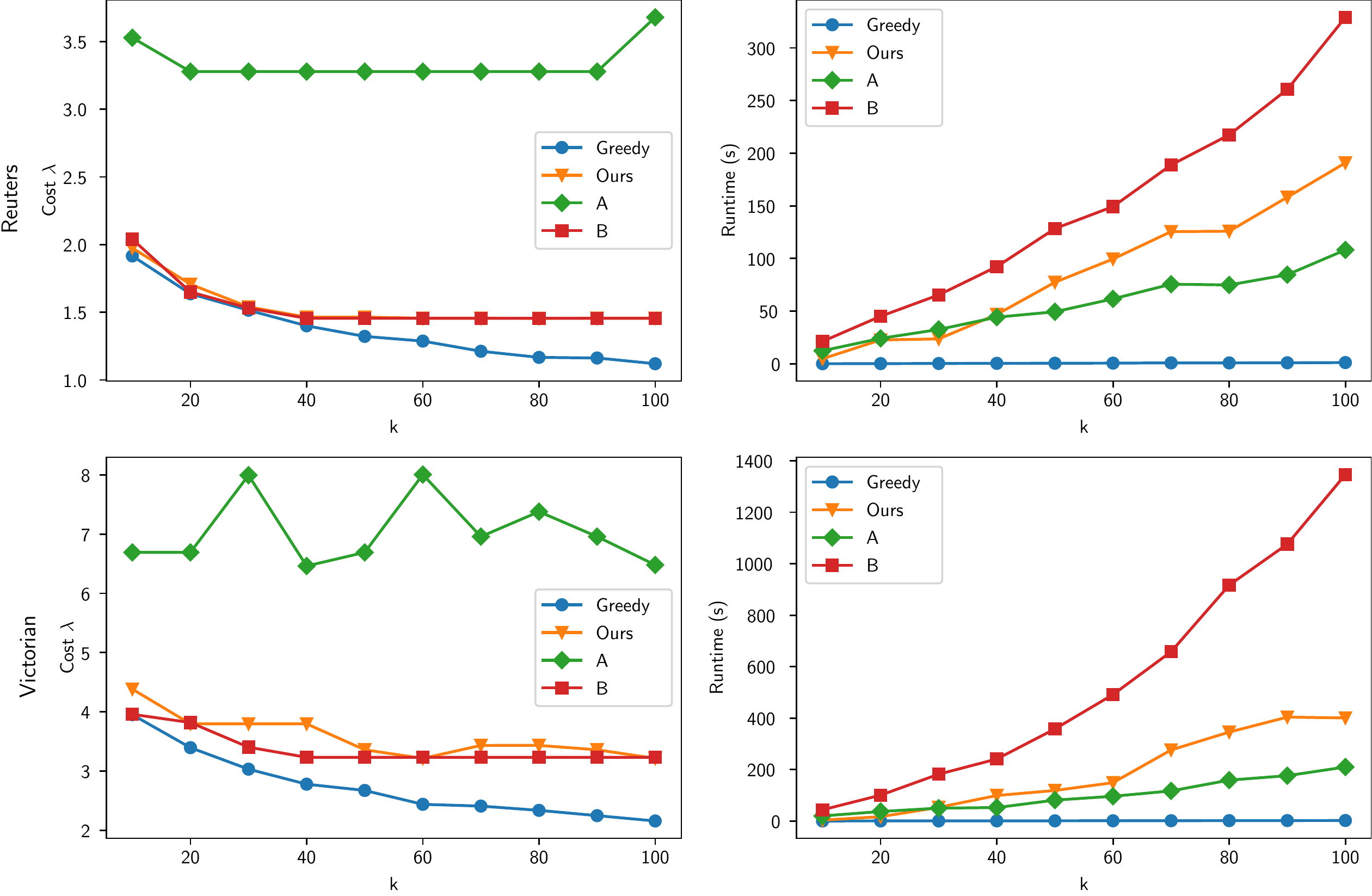}
     \caption{Cost and runtime of the solution vs. three baselines on reuters and victorian for varying $k$, using $\epsilon=0.1, \alpha=0.5, \beta=0$}
     \label{fig:exp2}
 \end{figure}
 \begin{figure}[H]
    \centering
    \includegraphics[width=\textwidth]{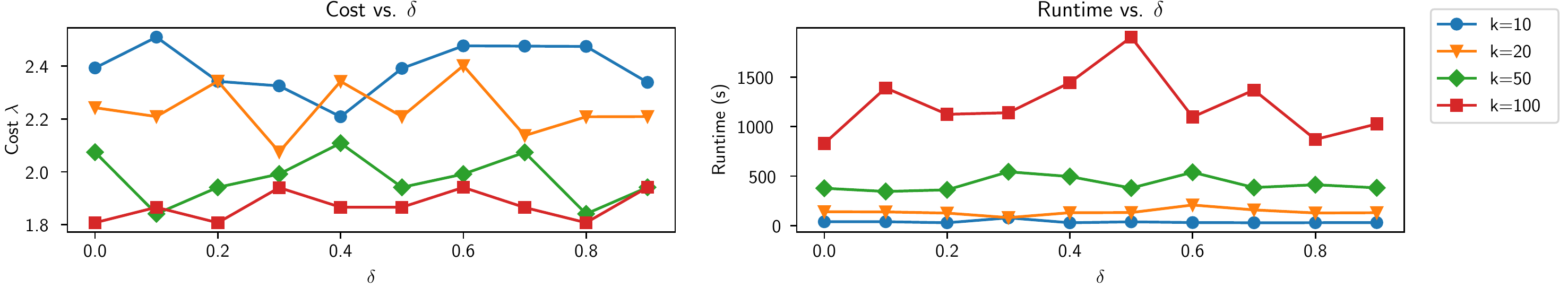}
    \caption{Cost and runtime of our algorithm over varying $\delta$ for $\epsilon=0.1$ on reuters}
    \label{fig:exp3}
 \end{figure}
  \begin{figure}[H]
    \centering
    \includegraphics[width=\textwidth]{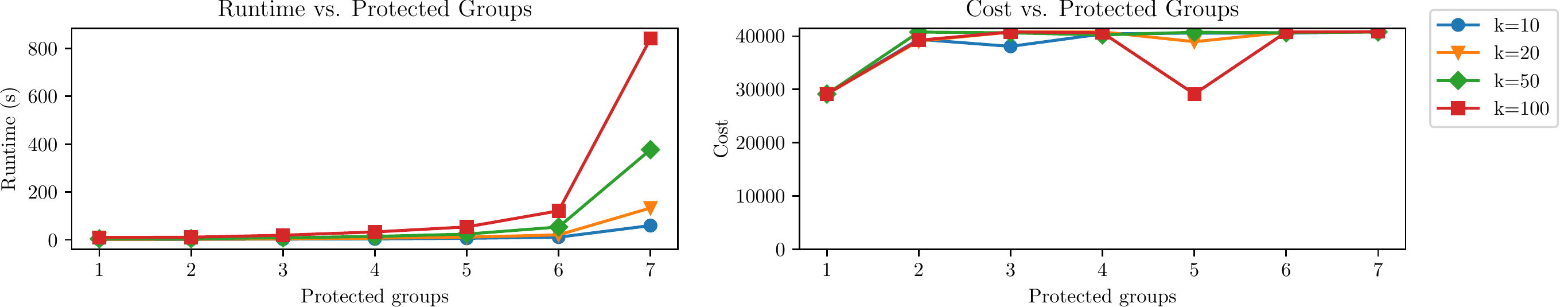}
    \caption{Runtime, cost vs. protected group for $\epsilon=0.5, \delta=0.2$ on bank dataset. Groups 1-6 are either binary (2 categories) or tertiary (3 categories). Group 7 has 11 categories.}
    \label{fig:exp4}
 \end{figure}
The third result is the performance of our algorithm with changing $\delta$ value, where $\delta$ is a parameter defined by Bera et al. to control $\alpha_i$ and $\beta_i$, with $\beta_i = r_i (1-\delta)$ and $\alpha_i = r_i / (1- \delta)$, and $r_i$ as the ratio of size of group $i$ to the total number of points. Figure \ref{fig:exp3} shows that there is no trend of worsening runtime with respect to tighter $\delta$.
 
Finally, the effect of the number of protected groups on runtime and cost is shown in figure \ref{fig:exp4}. As expected, the runtime increases with the number of protected groups since the number of unique joiner-signatures pairs increases. At the same time, the costs remain similar possibly due to correlated protected groups leading to unchanging clusters even if the number of protected groups increases.
 
\section{Conclusion}
In this paper, we present a $k$-\{center, supplier\} fair clustering algorithm that imposes fairness constraints to ensure no group becomes a majority or minority in any cluster. Our algorithm improves the approximation ratio from 4 to 3, beats the state of the art by several orders of magnitude on several datasets, scales well for large $N$, and respects the fairness constraints on expectation.  The LP size used by the algorithm does not necessarily depend on the number of points in the input, but instead on the number of unique joiner-signature pairs in the input. 
\newpage 
\section{Broader Impact}
Any clustering algorithm that doesn't take into consideration the underlying bias in data for some minority groups risks producing biased results against these groups. For example, In the United States there are computer programs that predict whether a criminal is likely to re-offend, and is used by judges to decide on the sentence length. One such program is the Correctional Offender Management Profiling for
Alternative Sanctions (COMPAS) sold by NORTHPOINTE and was used by judges in Wisconsin. In a report by Pro-Publica, it was shown that COMPAS is racially biased, where it was twice as likely to falsely flag African American defendants as future criminals as much as White defendants, holding everything else constant. This was because the features (137 features representing answers of questions) that were used when clustering potential re-offenders were heavily biased against African American defendants. Several other examples show the necessity of having a fair clustering algorithm that protects minority groups, as well as prevents a certain group from dominating any cluster. This justifies the necessity of studying this problem and proposing new fair algorithms.

Our algorithm puts an effective boundary ensure all groups are neither dominating nor underrepresented, as guaranteed by the $\alpha$, $\beta$ parameters. It could be useful when we want to maintain diversity in clusters, as pointed out in the marketing and committee selection examples by \cite{mainpaper}. However, putting our algorithm in the wrong hands can lead to intensifying the issue. A malicious adversary can restrict certain groups (by changing the $\alpha, \beta$ parameter) to bias the model against having clusters with a representative percentage of a certain group, and thus care must be taken when setting the $\alpha, \beta$ parameters to not pass one's own bias into the clustering algorithm.

\bibliographystyle{abbrvnat}
\bibliography{biblio.bib}

\section*{Appendix $A$}
\begin{lemma}
Given a set of $k-$centers $S$, and an associated $\lambda-$Venn diagram $R$. Denote $2^S$ as the powerset of $S$, then the following holds 
\begin{enumerate}
    \item For non-empty $A,A' \subset S, A \neq A',$ we must have $J_{A,\lambda }\cap J_{A',\lambda}=\phi $
    \item The union of all joiners partitions $R$: $R = \bigcup_{A \in 2^S, A\neq \phi} J_{A,\lambda} $
    \item The number of non empty joiners is at most $\min(2^k-1, N)$ 
\end{enumerate}
\end{lemma}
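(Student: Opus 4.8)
The plan is to establish the three claims in the stated order, since the counting claim leans on the disjointness claim. For claim~1, I would fix two distinct non-empty sets $A, A' \subset S$ and exploit their symmetric difference: because $A \neq A'$, some center $x$ lies in exactly one of them, say $x \in A - A'$ (the other case is symmetric). By the definition of a joiner, every point of $J_{A,\lambda}$ lies in $B(x,\lambda)$ since $x \in A$, whereas every point of $J_{A',\lambda}$ lies in $\overline{B(x,\lambda)}$ since $x \in S - A'$. As $B(x,\lambda)$ and its complement are disjoint, no point can belong to both joiners, giving $J_{A,\lambda} \cap J_{A',\lambda} = \phi$.

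For claim~2, I would prove the two inclusions separately. For $\bigcup_{A \neq \phi} J_{A,\lambda} \subseteq R$, note any non-empty $A$ contains some center $x_0$, so $J_{A,\lambda} \subseteq B(x_0,\lambda) \subseteq R$. For the reverse inclusion, take any $p \in R$ and set $A_p = \{ x \in S : p \in B(x,\lambda) \}$, the collection of centers whose $\lambda$-ball contains $p$. Since $p \in R$ this set is non-empty, and by construction $p \in B(x,\lambda)$ for every $x \in A_p$ and $p \in \overline{B(x,\lambda)}$ for every $x \in S - A_p$, so $p \in J_{A_p,\lambda}$. Combining the two inclusions with claim~1 shows the non-empty joiners form a partition of $R$.

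For claim~3, I would bound the count in two independent ways and take the minimum. First, every non-empty joiner is indexed by a distinct non-empty subset $A \subset S$, and there are exactly $2^k - 1$ such subsets, yielding the $2^k - 1$ bound. Second, a joiner is non-empty only if at least one point of $C$ belongs to it; since claim~1 guarantees the joiners are pairwise disjoint, distinct non-empty joiners must contain distinct points of $C$, so their number cannot exceed $|C| = N$. Hence the number of non-empty joiners is at most $\min(2^k - 1, N)$.

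This is a routine topology lemma, so I do not expect a genuine obstacle; the only points requiring care are invoking the \emph{closedness} of $B(x,\lambda)$ in claim~1 (so that it is truly disjoint from its complement, making the membership contradiction clean), and correctly reusing the disjointness from claim~1 when deriving the $N$ bound in claim~3, since without disjointness the map from non-empty joiners to witnessing points of $C$ need not be injective.
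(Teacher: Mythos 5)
Your proof is correct and follows essentially the same route as the paper's: a separating center from the set difference for disjointness, the map $p \mapsto \{x \in S : p \in B(x,\lambda)\}$ for the partition claim, and the two counting bounds combined via disjointness for the $\min(2^k-1, N)$ bound. The only cosmetic difference is that you prove both inclusions in claim~2 explicitly, whereas the paper treats the containment $\bigcup_A J_{A,\lambda} \subseteq R$ as immediate.
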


\begin{proof}
1) Without loss of generality, assume $|A|\leq |A'|$. Since $A'\neq A$, there exists an $i\in A'$ such that $i\not \in  A$, or equivalently, $i\in S-A$. Now take any point $x\in J_{A',\lambda}$. By definition, $x\in B(i, \lambda)$ must be true. However, using the fact that a point in $J_{A,\lambda}$ must belong to the region $\overline {B(i, \lambda)}$, it follows that $x$ cannot belong to $J_{A,\lambda}$ \\ \\ 
2) Take any point $x \in R$. Let $A$ be the set of centers $A\subset S$ that satisfy $d(i, x)\leq \lambda$. Then the set of centers in $j\in S-A$ satisfy $d(i, x)>\lambda$. This implies that $x$ belongs to $J_{A,\lambda}$ by the definition of $J_{A,\lambda}$. Since $A\in 2^S, A\neq \phi$, then $x$ belongs to the union of joiners. Also note using (1) that the joiners are disjoint, so they partition $R$. \\ \\
3) There are at most $2^{k}-1$ non-empty elements in the power set of $S$. However, every point from the $N$ points belongs to one joiner only (using (1)). This means that the number of non empty joiners is bounded by the number of points, $N$. The result follows. 
\end{proof}

\section*{Appendix $B$}
\begin{lemma}
The number of variables is at most $\min(2^{k-1}k |I|, Nk)$
\end{lemma}
\begin{proof}
For a fixed signature $c\in I$, and a set $A\subset S$ of size $i$, there are at most $i$ LP variables $\{x_{c, A, j} | j\in A\}$. There are $k \choose i$ sets of size $i$, so there are at most $|I| {k\choose i} i$ variables for sets of size $i$. Summing this up over the set sizes, we get an upper bound on the number of variables
\[\mathlarger{ \sum}_{i=1}^k |I| {k\choose i} i = 2^{k-1}k|I| \]
However, each point in the input belongs to one, and only one set $L(a,A,\lambda)$ (since a point has $1$ signature, and belongs to $1$ joiner $J_{A,\lambda}$), leading to at most $N$ pairs $(a,A)$ satisfying $|L(a,A,\lambda)|>0$. For each pair $(a,A)$, there could be at most $k$ variables $\{x_{a,A,j}|j\in A\}$, which bounds the number of variables by $Nk$. Combining the two bounds yields the required result. 
\end{proof}
Now, we bound the number of constraints

\begin{lemma}
The number of constraints is at most $kl+\min(2^k|I|,Nk)+\min(2^{k-1}k|I|, Nk)$
\end{lemma}
\begin{proof}
For the set of constraints $(1)$, there are $kl$ constraints. \\
For the set of constraints $(2)$, There are $|I|$ signatures, and at most $2^k$ possible $A$, so at most $2^k|I|$ constraints. \\
However, each point belongs to one, and only one of $L(a,A,\lambda)$, leading to at most $N$ pairs $(a,A)$ satisfying $|L(a,A,\lambda)|>0$, which bounds the number of constraints by $Nk$. \\
Finally, from Lemma $2$, there are at most $\min(2^{k-1}k|I|, Nk)$ variables, and thus at most $\min(2^{k-1}k|I|, Nk)$ for the set of constraints $(3)$. \\ 
Combining the bounds for all sets of constraints, we get the desired result. 
\end{proof}

\section*{Appendix $C$}
In this section, we show the \textbf{frequency-distributor} linear program for the example in Figure $1$ in the paper. There are $3$ non overlapping groups, Red, Green, and Blue. Thus, the signature of any point is of length $1$. We also set $F=C, k=3, \beta_g=0, \alpha_g=\alpha$. Suppose that the initial centers returned by the greedy algorithm are the points $S=\{3,7,13\}$. \\ \\
For each point $x_i$, make a list $S' \subset S$ of cluster centers $j$ such that $d(x_i, j)\leq \lambda \iff  j \in S'$
In other words, $S'$ are the cluster centers reachable from $x_i$. Associate the point $x_i$ and its color with $S'$. A straightforward implementation with a hash map would take $O(Nk)$ time. Table $1$ shows a summary after the $O(Nk)$ computation.
\begin{table}
    \centering
    \begin{tabular}{| l | l | }
    \hline
    Region, Color & Number of Points \\ \hline
    $J_{\{1\},\lambda}$,Red &  1\\ \hline
    $J_{\{1\},\lambda}$,Green & 2 \\ \hline
    $J_{\{1\},\lambda}$,Blue & 1 \\ \hline
    $J_{\{2\},\lambda}$,Red & 2 \\ \hline
    $J_{\{2\},\lambda}$,Green & 1 \\ \hline
    $J_{\{2\},\lambda}$,Blue & 1 \\ \hline
    $J_{\{3\},\lambda}$,Red &  1\\ \hline
    \end{tabular}
    \begin{tabular}{| l | l | }
    \hline
    Region, Color & Number of Points \\ \hline
    $J_{\{3\},\lambda}$,Green & 2 \\ \hline
    $J_{\{3\},\lambda}$,Blue & 1 \\ \hline
    $J_{\{1,3\},\lambda}$,Blue & 1 \\ \hline
    $J_{\{2,3\},\lambda}$,Red &  1\\ \hline
    $J_{\{2,3\},\lambda}$,Blue &1  \\ \hline
    $J_{\{1,2,3\},\lambda}$,Red & 1 \\ \hline
    \end{tabular}

    \caption{\label{tab:frequencies}Frequency for each Joiner-color pair.}
\end{table}
Notice that $J_{\{1,2\},\lambda}$ is missing from the hash map. This is because no point belongs to $J_{\{1,2\},\lambda}$. \\ \\
Finally, we define the corresponding $18$ LP variables to the region color pairs in Table $1$ (We abbreviate Red, Green, and Blue with R,G,B respectively):
\[
\substack{ \{x_{ R, \{1\} ,1 },x_{ G, \{1\} ,1 }, x_{ B, \{1\} ,1 }, x_{ R, \{2\} ,2 },x_{ G, \{2\} ,2 }, x_{ B, \{2\} ,2 } ,x_{ R, \{3\} ,3 },x_{ G, \{3\} ,3 }, x_{ B, \{3\} ,3 }, x_{ B, \{1,3\} ,1 },x_{ B, \{1,3\} ,3 } \\  x_{ R, \{2,3\} ,2 },x_{ R, \{2,3\} ,3 }, x_{ B, \{2,3\} ,2 },x_{ B, \{2,3\} ,3 } ,x_{ R, \{1,2,3\} ,1 },x_{ R, \{1,2,3\} ,2 }, x_{ R, \{1,2,3\} ,3 } \} }
\]
These define the following \textbf{frequency-distributor} LP:
\begin{equation*}
\hspace{-3cm }
\begin{array}{ll}
\min 1 &\\ 
x_{ R, \{1,2,3\} ,1 }+ x_{ R, \{1\} ,1 } &\leq \alpha( x_{ R, \{1,2,3\} ,1 }+ x_{ R, \{1\} ,1 }+ x_{ G, \{1\} ,1 }+ x_{ B, \{1\} ,1 }+x_{ B, \{1,3\} ,1 }) \\

x_{ G, \{1\} ,1 }&\leq \alpha( x_{ R, \{1,2,3\} ,1 }+ x_{ R, \{1\} ,1 }+ x_{ G, \{1\} ,1 }+ x_{ B, \{1\} ,1 }+x_{ B, \{1,3\} ,1 }) \\

x_{ B, \{1\} ,1 }+x_{ B, \{1,3\} ,1 } &\leq \alpha( x_{ R, \{1,2,3\} ,1 }+ x_{ R, \{1\} ,1 }+ x_{ G, \{1\} ,1 }+ x_{ B, \{1\} ,1 }+x_{ B, \{1,3\} ,1 }) \\ \\

x_{ R, \{2,3\} ,2 }+x_{ R, \{1,2,3\} ,2 }+x_{ R, \{2\} ,2 } &\leq \alpha (x_{ R, \{2,3\} ,2 }+x_{ R, \{1,2,3\} ,2 }+x_{ R, \{2\} ,2 }+x_{ G, \{2\} ,2 }+x_{ B, \{2,3\} ,2 }+x_{ B, \{2\} ,2 })  \\

x_{ G, \{2\} ,2 } &\leq \alpha (x_{ R, \{2,3\} ,2 }+x_{ R, \{1,2,3\} ,2 }+x_{ R, \{2\} ,2 }+x_{ G, \{2\} ,2 }+x_{ B, \{2,3\} ,2 }+x_{ B, \{2\} ,2 })  \\

x_{ B, \{2,3\} ,2 }+x_{ B, \{2\} ,2 } &\leq \alpha (x_{ R, \{2,3\} ,2 }+x_{ R, \{1,2,3\} ,2 }+x_{ R, \{2\} ,2 }+x_{ G, \{2\} ,2 }+x_{ B, \{2,3\} ,2 }+x_{ B, \{2\} ,2 })  \\ \\

x_{ R, \{2,3\} ,3 }+x_{ R, \{1,2,3\} ,3 }+x_{ R, \{3\} ,3 } & \leq \alpha( x_{ R, \{2,3\} ,3 }+x_{ R, \{1,2,3\} ,3 }+x_{ R, \{3\} ,3 }+x_{ G, \{3\} ,3 }+x_{ B, \{1,3\} ,3 }+x_{ B, \{2,3\} ,3 }+x_{ B, \{3\} ,3 }) \\

x_{ G, \{3\} ,3 } & \leq \alpha( x_{ R, \{2,3\} ,3 }+x_{ R, \{1,2,3\} ,3 }+x_{ R, \{3\} ,3 }+x_{ G, \{3\} ,3 }+x_{ B, \{1,3\} ,3 }+x_{ B, \{2,3\} ,3 }+x_{ B, \{3\} ,3 }) \\

x_{ B, \{1,3\} ,3 }+x_{ B, \{2,3\} ,3 }+1 & \leq \alpha( x_{ R, \{2,3\} ,3 }+x_{ R, \{1,2,3\} ,3 }+x_{ R, \{3\} ,3 }+x_{ G, \{3\} ,3 }+x_{ B, \{1,3\} ,3 }+x_{ B, \{2,3\} ,3 }+x_{ B, \{3\} ,3 }) \\ \\

x_{ B, \{1,3\} ,1 }+x_{ B, \{1,3\} ,3 } &= 1 \\
x_{ B, \{2,3\} ,2 }+x_{ B, \{2,3\} ,3 } &= 1 \\
x_{ R, \{2,3\} ,2 }+x_{ R, \{2,3\} ,3 } &= 1 \\
x_{ R, \{1,2,3\} ,1 }+x_{ R, \{1,2,3\} ,2 }+x_{ R, \{1,2,3\} ,3 } &= 1 \\ \\

x_{ R, \{1\} ,1 }=1,x_{ G, \{1\} ,1 }=2 , x_{ B, \{1\} ,1 } = 1 &\\

x_{ R, \{2\} ,2 } = 2,x_{ G, \{2\} ,2 } = 1, x_{ B, \{2\} ,2 } = 1 &\\

x_{ R, \{3\} ,3 } = 1,x_{ G, \{3\} ,3 } = 1, x_{ B, \{3\} ,3 } = 1 & \\ \\ 
{\bf x} &\geq {\bf 0}
\end{array}
\end{equation*}
Once we have solved the linear program, then for each point $x_i\in C$, suppose the point has color $c$, and belongs to $J_{A}$. Then assign the point to cluster $j\in A$ with probability $\frac{x_{c,A,j}}{|L(c,A,\lambda|}$. Then on expectation, each cluster would receive $x_{c,A,j}$ points from $L(c,A,\lambda)$. This leads to a solution that respects the constraints on expectation. Indeed solving the LP above and doing the randomized assignment results in $\mathcal{E}=0$ in $1000$ random assignments. 

\section*{Appendix D: Runtime Analysis}
Denote $LP(m,n)$ as the time needed to solve a linear programming problem with $m$ variables and $n$ constraints. In Algorithm $1$, The basic implementation of the greedy $k-$center algorithm takes $O(Nkd)$ time, where $d$ is the dimension of the input points. The distance matrix $distance\_matrix(X,S)$ calculation takes $O(Nkd)$ time. In addition, there are $O(\log \frac{\max(d)}{\epsilon})$ binary search iterations. In each iteration, Lines $8-11$ of Algorithm $1$ takes $O(Nk)$ time. The frequency-distributor LP can be constructed in $O(Nkl)$ time (basic implementation without any special data structures to build the LP) and solved in $LP( \min(2^{k-1}k|I|,Nk), kl+\min(2^k|I|,Nk)+\min(2^{k-1}k|I|, Nk) )$. For small $k,I, l$, the number of variables and constraints is very small which leads to a very fast construction and feasability check for the LP.\\ \\
Hence, the entire algorithm takes on a worse case scenario $O(Nkd + \log (\frac{\max(d)}{\epsilon})(Nkl+LP(Nk, 3Nk)))$. 

\end{document}